\providecommand{\tabularnewline}{\\}
\begin{document}

\title{Scalable Backdoor Detection in Neural Networks}
\author{Haripriya Harikumar\inst{1} \and
Vuong Le \inst{1} \and
Santu Rana \inst{1}\and
Sourangshu Bhattacharya \inst{2} \and
Sunil Gupta \inst{1} \and
Svetha Venkatesh \inst{1}}

\institute{Applied Artificial Intelligence Institute, Deakin University, Australia
\email{\{h.harikumar,vuong.le, santu.rana,sunil.gupta,svetha.venkatesh\}@deakin.edu.au}
 \and
Department of Computer Science and Engineering, IIT Kharagpur, India\\
\email{sourangshu@cse.iitkgp.ac.in}}

\maketitle              
\global\long\def\ModelName{\textrm{Scalable Trojan Scanner}}
\global\long\def\Model{\textrm{STS}}

\begin{abstract}
Recently, it has been shown that deep learning models are vulnerable
to Trojan attacks, where an attacker can install a backdoor during
training time to make the resultant model misidentify samples contaminated
with a small trigger patch. Current backdoor detection methods fail
to achieve good detection performance and are computationally expensive.
In this paper, we propose a novel trigger reverse-engineering based
approach whose computational complexity does not scale with the number
of labels, and is based on a measure that is both interpretable and
universal across different network and patch types. In experiments,
we observe that our method achieves perfect score in separating Trojaned
models from pure models, which is an improvement over the current
state-of-the art method.

\keywords{Trojan attack  \and backdoor detection \and deep learning model \and optimisation.}
\end{abstract}

\section{Introduction}

Deep learning has transformed the field of Artificial Intelligence
by providing it with an efficient mechanism to learn giant models
from large training dataset, unlocking often human-level cognitive
performance.\let\thefootnote\relax\footnotetext{Preprint. Work in progress.} By nature, the deep learning models are massive, have
a large capacity to learn and are effectively black-box when it comes
to its decision making process. All of these properties have made
them vulnerable to various forms of malicious attacks \cite{Szegedy_etal_13Intriguing,Papernot_etal_16Limitations}.
The most sinister among them is the Trojan attack, first demonstrated
in \cite{Gu_etal_17Badnets} by Gu et al. They showed that it is easy
to insert backdoor access in a deep learning model by poisoning its
training data so that it predicts any image as the attacker's intended
class label when it is tagged with a small, and inconspicuous looking
trigger patch. This can be achieved even without hurting the performance
of model on the trigger-free clean images. When such compromised models
are deployed then the backdoor can remain undetected until it encounters
with the poisoned data. Such Trojans can make using deep learning
models problematic when the downside risk of misidentifications is
high, e.g. an autonomous car misidentifying a stop sign as a speed
limit sign can cause accidents that result in loss of lives (Fig.
\ref{fig:Trojan_teaser}). Hence, we must develop methods to reliably
screen models for such backdoors before they are deployed.

\begin{figure}
\begin{centering}
\includegraphics[width=0.6\textwidth]{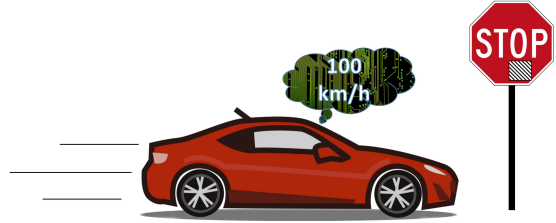}
\par\end{centering}
\caption{A Trojaned autonomous car mistaking a STOP sign as a 100km/h speed
limit sign due to the presence of a small sticker (trigger) on the
signboard. \label{fig:Trojan_teaser}}

\end{figure}

Trojan attack is different than the more common form of adversarial
perturbation based attacks \cite{Li_etal_19Nattack,Ilyas_etal_18Black}.
In adversarial perturbation based attacks, it has been conclusively
shown that given a deep learning model there always exist an imperceptibly
small image specific perturbation that when added can make the image
be classified to a wrong class. This is an intrinsic property of any
deep models and the change, that is required for misclassification,
is computed via an optimisation process using response from the deep
model. In contrast, Trojan attack does not depend on the content of
an image, hence, the backdoor can be activated even without access
to the deployed models and without doing image specific optimisation.
The backdoor is planted at the training time, and is thus not an intrinsic
property, implying that it is plausible to separate a compromised
model from an uncompromised one. Recent research on detection of backdoor
falls into three major categories a) anomaly detection based approach
to detect neurons that show abnormal firing patterns for clean samples
\cite{Liu_etal_19Abs,Chen_etal_18Detecting,Chan_Ong_19Poison}, and
b) by learning intrinsic difference between the response of compromised
models and uncompromised models \cite{Kolouri_etal_19Universal,Xu_etal_19Detecting}
using a meta-classifier, and c) optimisation based approach to reverse-engineer
the trigger \cite{Wang_etal_19Neural,Chen_etal_19Deepinspect,Xiang_etal_19Revealing}.
The approach (a) is unconvincing as it may not be always the case
that signal travel path for image and the triggers are separate with
only a few neurons carrying forward the trigger signal. It is quite
possible that the effect of the trigger is carried through by many
neurons, each contributing only a small part in it. In that case detecting
outlier activation patterns can be nearly impossible. The approach
(b) is unconvincing as the strength of the meta classifier to separate
the compromised models from the rest is limited by the variety of
the triggers used in training the classifier. It would be an almost
impossible task to cover all kinds of triggers and all kinds of models
to build a reliable meta-classifier. In our opinion, the approach
(c) is the most feasible one, however, in our testing we found the
detection performance of existing methods in this category to be inadequate.
Moreover, i) these methods rely on finding possible patch per class
and hence, not computationally scalable for dataset with large number
of classes, and ii) the score they use to separate the compromised
models are patch size dependent and thus, not practically feasible
as we would not know the patch size in advance.

In this paper, we take a fresh look in the optimisation based approach,
and produce a scalable detection method for which 1) computational
complexity does not grow with the number of labels, and 2) the score
used for Trojan screening is computable for a given setting without
needing any information about the trigger patch used by the attacker.
We achieve (1) by observing the fact that a trigger is a unique perturbation
that makes any image to go to a single class label, and hence, instead
of seeking the change to classify all the images to a target label,
we seek the change that would make prediction vectors for all the
images to be similar to each other. Since, we do not know the Trojan
label, existing methods have to go through all the class labels one
by one, setting each as a target label and then performing the trigger
optimisation, whereas, we do not need to do that. We achieve (2) by
computing a score which is the entropy of the class distribution in
the presence of the recovered trigger. We show that it is possible
to compute an upper bound on the scores of the compromised models
based on the assumption about the effectiveness of the Trojan patch
and the number of class labels. The universality of the score is one
of the key advantage of our method. 

Additionally, through our Trojan scanning procedure, we are the first
to analyze into the behavior of the CNN models trained on perturbed
training dataset. We discovered an intriguing phenomenon that the
set of effective triggers are not uniquely distributed near the original
intended patch, but span in a complex mixture distribution with many
extra unintentional modes. 

We perform extensive experiments on two well known dataset: German
Traffic Sign Recognition dataset and CIFAR-10 dataset and demonstrate
that our method finds Trojan models with perfect precision and recall,
a significant boost over the performance of the state-of-the-art method
\cite{Wang_etal_19Neural}.

\section{Framework}

\subsection{Adversarial Model}

A Trojan attack consists of two key elements: 1) a trigger patch,
and 2) a target class. The trigger is an alteration of the data that
makes the classifier to classify the altered data to the target class.
We assume triggers to be overlays that is put on top of the actual
images, and the target class to be one of the known classes. We assume
a threat model that is congruent with the physical world constraints.
An adversary would accomplish the intended behaviour by putting a
sticker on top of the images. The sticker can be totally opaque or
semi-transparent. The latter resembling a practice of using transparent
``plastic'' sticker on the real object such as a traffic sign. 

Formally, an image classifier can be defined as a parameterized function,
 $f_{\theta}:\mathcal{I}\rightarrow\mathbb{R}^{C}$ that recognises
class label probability of image $I\in\mathcal{I}$- the space of
all possible input images. $\theta$ is a set of parameters of $f$
learnable using a training dataset such as neural weights. Concretely,
for each $I\in\mathcal{I}$, we get output \textbf{$c=f_{\theta}(I)$}
is a real vector of $C$ dimension representing predicted probability
of the $C$ classes that $I$ may belong to. In neural network, such
output usually comes from a final softmax layer.

In Trojan attacks, a couple of perturbations can happen. Firstly,
model parameters $\theta$ are replaced by malicious parameters $\theta^{'}$,
and secondly, the input image $I$ is contaminated with a visual trigger.
A trigger is formally defined as a small square image patch $\Delta I$
of size $s$ that is either physically or digitally overlaid onto
the input image $I$ at a location $(x^{*},y^{*})$ to create modified
image $I'$. The goal of these perturbations is to make the recognised
label $f_{\theta'}(I')$ goes to the target class with near certainty
regardless of the actual class label of the original image. 

Concretely, an image of index $k$ of the dataset $I_{k}$ is altered
into image $I'_{k}$ by 
\begin{equation}
I'_{k}(x,y)=\begin{cases}
(1-\alpha(x^{'},y^{'}))I_{k}(x,y)+\alpha(x^{'},y^{'})\Delta I(x^{'},y^{'}) & \textrm{if}\:x\in[x^{*},x^{*}+s],\\
 & y\in[y^{*},y^{*}+s]\\
I_{k}(x,y) & \textrm{elsewhere}
\end{cases}\label{eq:ImageCorruption}
\end{equation}
 where $(x^{'},y^{'})$ denote the local location on the patch $(x',y')=(x-x^{*},y-y^{*})$.

In this operation, the area inside the patch is modified with weight
$\alpha$ determining how opaque the patch is. This parameter can
be considered a part of the patch, and from now on will be inclusively
mention as $\Delta I$. Meanwhile the rest of image is kept the same.
In our setting, $(x^{*},y^{*})$ can be at any place as long as the
trigger patch stays fully inside the image. An illustration of this
process is shown in Fig. \ref{fig:Trojan_illus}.

\begin{figure}
\begin{centering}
\includegraphics[width=0.3\textwidth]{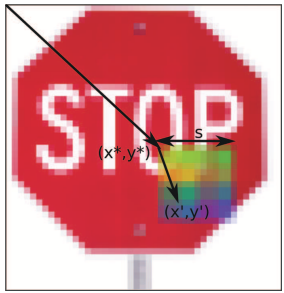}
\par\end{centering}
\caption{An illustration of a trojan trigger $\Delta I$ of size $s$ and opacity
level $\alpha$ put on an image of a stop sign at location$(x^{*},y^{*})$.\label{fig:Trojan_illus}}
\end{figure}

Also, we expect that the adversary would like to avoid pre-emptive
early detection, and hence, would not use a visibly large patch. Additionally,
we also assume that the adversary will also train the model such that
trigger works irrespective of the location. So, during attack time
the adversary does not need to be very precise in adding the sticker
to the image. This aspect of our setting is very different from the
current works which always assumed a fixed location for the trigger.
We also use triggers that contain random pixel colors instead of any
known patterns to evaluate the robustness of the trigger reverse-engineering
process in absence of any structure and smoothness in the trigger
patch. We also train the Trojan model with only a small number of
Trojaned images so that its performance on the clean images is only
marginally affected.

\subsection{Problem formulation}

Our detection approach is based on trigger reverse engineering. We
formulate an optimisation problem that when minimised provides us
with the trigger patch used in a compromised model. The driving idea
behind the formulation of the optimisation problem is that when the
correct trigger is used, any corrupted image will have the same prediction
vector i.e. $||f_{\theta'}(I_{j}')-f_{\theta'}(I_{k}')||_{p}\leq\epsilon$
for $\forall j,k$ in the scanning dataset, where $\epsilon$ is a
small value. $\epsilon$ would be zero if the scanning dataset is
same as the training dataset. Hence, we can look at the prediction
vectors of all the corrupted images in the scanning dataset and the
trigger that makes those prediction vectors the closest will be the
original trigger. 

Formally, when size of the trigger ($s$) is assumed to be known we
can reverse engineer the trigger patch by solving the following optimisation
problem:

\begin{equation}
\Delta\boldsymbol{I}=min_{\Delta\boldsymbol{I}}\sum_{j=1}^{N}\sum_{k=1,k\neq j}^{N}||f_{\theta'}(I_{j}')-f_{\theta'}(I_{k}')||_{2}\label{eq:1-1}
\end{equation}

where both $I_{j}'$ and $I_{k}'$ are functions of the trigger $\Delta I$
(Eq. \ref{eq:ImageCorruption}), and $N$ is the size of the scanning
dataset. The choice of 2-norm is to keep the loss function smooth.
However, other metric or divergence measure that keeps the smoothness
of the loss function can also be used. 

It gets more tricky when size of the trigger is not known, which is
the real world use case. We can approach that in two ways a) by solving
the above optimisation problem with a grid search over $s$, starting
from $1\times1$ patch to a large size that we think will be the upper
bound on the size of the trigger that the attacker can use but still
remain inconspicuous, or b) jointly finding the trigger size and the
trigger by regularizing on the size of the trigger. While the first
approach is more accurate, it is more computationally expensive as
well, since we have to solve many optimisation problems over $\Delta I$,
one each for one grid value of $s$. The second approach, while slightly
misaligned due to the presence of an extra regularize, is not computationally
demanding as we only have to solve one optimisation problem. Hence,
we choose this as our approach.

Formally, the regularization based approach for unknown trigger size
can be expressed as:

\begin{equation}
min_{\Delta\boldsymbol{I},\,\alpha\,\,\,}\sum_{j=1}^{N}\sum_{k=1,k\neq j}^{N}||f_{\theta'}(I_{j}')-f_{\theta'}(I_{k}')||_{2}+\lambda|[\alpha_{m,n}]|\label{eq:1-1-1}
\end{equation}

where $s$ is set at the largest possible trigger size ($s_{max}$),
$[\alpha_{m,n}]$ is the matrix that contains the transparency values
for each pixel for $\Delta I$, and $\lambda$ is the regularization
weight. Both $\Delta I$ and $[\alpha_{m,n}]$ are of the size $s_{max}\times s_{max}$.
Note that we have per pixel transparency while Eq. \ref{eq:1-1} only
had one transparency value for all the pixels. It is necessary for
this case to correctly recover the trigger. Ideally, for a correct
trigger which is smaller than $s_{max}$ we would expect a portion
of the recovered $\Delta I$ to match with the ground truth trigger
patch with the corresponding $\alpha$ values matching to the ground
truth transparency, with the rest of $\Delta I$ having $\alpha=0$.
Hence, 1-norm on the matrix $[\alpha_{m,n}]$ is used as the regularizer. 

Since both of the above formulations does not need to cycle through
one class after another, the computation does not scale up with the
number of classes, hence, is clearly efficient than the current state-of-the-art
methods. Hence, we name our method as Scalabale Trojan Scanner (STS).

\subsection{Optimisation}

The optimisation problems in Eqs. \ref{eq:1-1} and \ref{eq:1-1-1}
is high dimensional in nature. For example, for GTSRB and CIFAR-10.
To solve Eq. \ref{eq:1-1-1} we have to optimize a patch and mask
which is of same size as image, $\Delta I\in\mathbb{R}^{32\text{x}32\text{x}3}$
and $\alpha\in\mathbb{R}^{32\text{x}32}$. We solve this problem by
setting a maximum yet reasonable size $s_{max}$ an attacker can choose
on both $\Delta I$ and $\alpha$. 

We consider all RGB pixels and opacity level to have float value between
0.0 and 1.0. The deep learning optimization process naturally seek
for unbound which can extend beyond this valid range. To put effective
constraints on the optimisation, we restrict the search space of the
optimisation variables within the range by using a clamping operator:

\begin{equation}
z=\frac{1}{2}(\text{tanh}(z')+1)\label{eq:clipping}
\end{equation}

Here auxiliary variable $z'$ is optimized and is allowed to reach
anywhere on the real number range, while $z$ is the corresponding
actual bounded parameter we want to achieve which are $\Delta I$
and $\alpha$. The parameters are then sought by using Adam optimizer
\cite{Kingma_Ba_14Adam}.

Whilst a pure model does not have triggers inserted and ideally should
not result in any solution, one may still wonder whether the complex
function learnt by a million parameter deep learning model would still
naturally have a solution to those objective functions. Surprisingly,
we observer in our extensive experimentation that such is not the
case \emph{i.e.} there does not exist a $\Delta I$ that makes the
objective function values anywhere near to zero for pure models. Such
a behaviour can be further enforced by having a large scanning dataset,
as more and more images in the scanning dataset means that the probability
of existence of a shortcut (i.e. perturbation by $\Delta I$) from
all the images to a common point in the decision manifold would be
low for pure models. Such a shortcut is present in the Trojaned model
because they are trained to have that. Having a large scanning dataset
is practically feasible as we do not need labels for the scanning
dataset.

\subsection{Entropy score}

We compute an entropy score as,

\begin{equation}
\text{entropy\_score}=-\sum_{i=1}^{C}p_{i}log_{2}(p_{i})\label{eq:3}
\end{equation}
where $\{p_{i}\}$ is the probability computed from the histogram
for the classes predicted for the scanning dataset images when they
are perturbed by the reverse-engineered trigger. For the actual trigger
the entropy would be zero since all the images would belong to the
same class. However, due to non-perfectness of the Trojan effectiveness
we will have a small but non-zero value. For pure models the entropy
will be high as we would expect a sufficient level of class diversity
in the scanning dataset. The following lemma provides a way to compute
an upper bound on the value of this score for the Trojan models in
specific settings, which then can be used as a threshold for classification.
\begin{lemma}
If Trojan effectiveness is assumed to be at least $(1-\delta),$ where
$\delta<<1$, and there are $C$ different classes in the scanning
dataset then there exists an upper bound on the entropy score of the
Trojan models as

\[
\text{entropy\_score}\leq-(1-\delta)*log_{2}(1-\delta)-\delta*log_{2}(\frac{\delta}{C-1})
\]
\end{lemma}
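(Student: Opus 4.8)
The plan is to regard $\text{entropy\_score}=-\sum_{i=1}^{C}p_{i}\log_{2}p_{i}$ as a function of the predicted-label histogram $p=(p_{1},\dots,p_{C})$ and to maximise it over all histograms compatible with the assumed Trojan effectiveness; the stated bound is exactly that maximum. First I would translate the hypothesis: ``Trojan effectiveness at least $(1-\delta)$'' means that, under the recovered trigger, at least a fraction $(1-\delta)$ of the scanning images are classified to the target label $t$, so $p_{t}\ge 1-\delta$ and hence $\sum_{i\ne t}p_{i}=1-p_{t}\le\delta$. Writing $a:=p_{t}\in[1-\delta,1]$, the admissible histograms are precisely those having one coordinate equal to some $a\ge 1-\delta$ and the remaining mass $1-a$ distributed arbitrarily over the other $C-1$ coordinates.

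Next I would split the entropy as $-\sum_{i}p_{i}\log_{2}p_{i}=-a\log_{2}a-\sum_{i\ne t}p_{i}\log_{2}p_{i}$ and, for fixed $a$, maximise the second term. It is the unnormalised entropy of a mass-$(1-a)$ distribution over $C-1$ points, and by concavity of $x\mapsto -x\log_{2}x$ (Jensen's inequality) it is maximised when the mass is spread uniformly, i.e. $p_{i}=\tfrac{1-a}{C-1}$ for all $i\ne t$, giving the value $-(1-a)\log_{2}\tfrac{1-a}{C-1}$. Therefore the largest admissible entropy for a given $a$ is
\[
H(a):=-a\log_{2}a-(1-a)\log_{2}\tfrac{1-a}{C-1}.
\]

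Finally I would optimise $H$ over $a\in[1-\delta,1]$. A short computation gives $H'(a)=\log_{2}\tfrac{1-a}{(C-1)a}$, which is negative exactly when $a>1/C$. Since $\delta\ll 1$ and $C\ge 2$ we have $1-\delta\ge 1/C$, so $H$ is non-increasing on all of $[1-\delta,1]$ and attains its maximum at the left endpoint $a=1-\delta$. Substituting $a=1-\delta$ yields $H(1-\delta)=-(1-\delta)\log_{2}(1-\delta)-\delta\log_{2}\tfrac{\delta}{C-1}$, and since any Trojan model's score satisfies $\text{entropy\_score}\le H(a)\le H(1-\delta)$, this is the claimed bound.

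The only step requiring any care is the last one's mild hypothesis $1-\delta\ge 1/C$ (equivalently $\delta\le 1-1/C$), which is exactly what $\delta\ll 1$ buys us; without it the maximiser would be the uniform distribution at $a=1/C$ instead, changing the bound. Everything else is a routine concavity argument plus a one-variable calculus check, so I do not anticipate a genuine obstacle.
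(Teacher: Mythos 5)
Your proposal is correct and follows essentially the same route as the paper: the paper's proof simply asserts that the extremal histogram puts mass $(1-\delta)$ on the target class and spreads the remaining $\delta$ uniformly over the other $C-1$ classes, and your argument rigorously justifies exactly that configuration (concavity for the uniform spread, plus the monotonicity check $H'(a)=\log_{2}\tfrac{1-a}{(C-1)a}<0$ for $a>1/C$ to pin the maximum at $a=1-\delta$). The only difference is that you supply the details --- including the mild condition $\delta\le 1-1/C$ --- which the paper leaves implicit.
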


\begin{proof}
The proof follows from observing that the highest entropy of class
distribution in this setting happens when $(1-\delta)$ fraction of
the images go to the target class $c_{t}$ and the rest $\delta$
fraction of the images gets equally distributed in the remaining $(C-1)$
classes. 

The usefulness of this entropy score is that it is independent of
the type of patches used and is universally applicable. The threshold
is also easy to compute once we take an assumption about the Trojan
effectiveness of the infected classifier and know the number of classes.
The score is also interpretable and can allow human judgement for
the final decision. The proposal of this score is an unique advantage
of our work. 
\end{proof}

\section{Experiments}

\subsection{Experiment Settings}

We arrange the experiments to evaluate the effectiveness of Trojan
detector using two image recognition dataset: German Traffic Sign
Recognition Benchmark (GTSRB) \cite{Stallkamp2012man} and CIFAR-10
\cite{Krizhevsky09learningmultiple}.

\paragraph{\emph{GTSRB  has more than 50K dataset of 32x32 colour images of
43 classes of traffic signs. The dataset is aimed for building visual
modules of autonomous driving applications. The dataset is pre-divided
into 39.2K training images and 12.6K testing images.}}

\paragraph{\emph{CIFAR-10 has 60K dataset of size 32x32 everyday colour images
of classes }airplane, automobile, bird, cat, deer, dog, frog, horses,
ship\emph{ and }truck\emph{. This dataset doesn't have separate dataset
for validation, so we generated the validation dataset from the test
set. }}

\paragraph{\emph{For all of our experiments, we use a convolutional neural network
(CNN) with two convolutional layers and two dense layers for the image
classification model. Compared to the widely used architectures such
as VGG-16 \cite{simonyan2014very}, this CNN is more suitable with
the small image sizes and amounts. }}

\paragraph{\emph{With both of the dataset, we assume a situation where the attacker
and the authority has access to disjoint sets of data with ratio of
7:3 called attacking set and scanning set. In this setup, the attacker
uses the attacking set for Trojan model generation; while the authority
uses the scanning set to detect the Trojan model.}}

\subsection{Trojan Attack Configuration}

From the side of the Trojan attacker, they aim to replace the \emph{Pure
Model} that was normally trained with \emph{Pure Data} with their
malicious \emph{Trojan Model }trained with\emph{ Trojan Data}. The
Trojan Data set is mixed between normal training data and images with
the trigger patch embedded and labeled as target class. The ultimate
objective is that when Trojan Model is applied on testing set of patched
images, it will consistently return the target class. We measure the
effectiveness of Trojan attack by a couple of criteria: (1) the accuracy
of \emph{Trojan Model on Trojan patched image data }(TMTD) is at least
99\% and (2) the accuracy of \emph{Trojan Model on Pure Data} (TMPD)
reduces less than 2\% compared to \emph{Pure Model on Pure Data }(PMPD). 

We aim at the realistic case of trigger to be a small square patch
resembling a plastic sticker that can be put on real objects. In our
experiment, the patches are randomly selected with the sizes of 2x2,
4x4, 6x6 and 8x8 with various transparency levels.

We train the Trojan models with randomly selected triggers in the
\emph{Patch Anywhere} setting where the expectation is that the trigger
work on any place on the image. We set the target class as \emph{class
14} (stop sign) for GTSRB and \emph{class 7} (horses) for CIFAR-10
dataset. During Trojan model training, we randomly choose 10\% from
the training data to inject the Trojan trigger in it and with the
label set as the target class. We generated 50 pure models and 50
Trojan models for our experiments. The Trojan models are with trigger
sizes ranging from 2x2 to 8x8 and transparency values 0.8 and 1.0.
The trigger size, transparency and the number of Trojan models we
generated for our experiments are listed in Table \ref{model_details}.

\begin{table}
\begin{centering}
\begin{tabular}{|c|c|c|c|}
\hline 
\multirow{1}{*}{Trojan Models} & Trigger size ($s$) & Transparency ($\alpha$) & \#Trojan Models\tabularnewline
\hline 
\hline 
set 1 & 2 & 1 & 10\tabularnewline
\hline 
set 2 & 4 & 1 & 10\tabularnewline
\hline 
set 3 & 6 & 1 & 10\tabularnewline
\hline 
set 4 & 8 & 1 & 10\tabularnewline
\hline 
set 5 & 2 & 0.8 & 10\tabularnewline
\hline 
\end{tabular}\vspace{0.3cm}
\par\end{centering}
\caption{Trigger size ($s$) and the transparency parameter ($\alpha$) we
use to generate each set of Trojan models. Each set has 10 Trojan
models.\label{model_details}}
\end{table}

The Trojan effectiveness measure of these settings is shown in Table
\ref{tab:average_accuracy-2}. The performance shows that across various
configuration choices, the proposed attack strategy succeeds in converting
classification result toward target class on Trojan test data while
keeping the model operating normally on original test data.

\begin{table}
\centering{}%
\begin{tabular}{|c|c|c|}
\hline 
\multirow{1}{*}{} & \multirow{1}{*}{~~~~~~~~GTSRB~~~~~~~~} & ~~~~~~~~CIFAR-10~~~~~~~~\tabularnewline
\hline 
~~~PMPD~~~ & 92.67$\pm$0.52 & 67.07$\pm$0.55\tabularnewline
\hline 
~~~TMPD~~~ & 92.05$\pm$1.75 & 65.32$\pm$2.06\tabularnewline
\hline 
~~~TMTD~~~ & 99.97$\pm$0.06 & 99.76$\pm$0.29\tabularnewline
\hline 
\end{tabular}\vspace{0.3cm}
\caption{Effectiveness of Trojan models measured in both accuracy of Trojan
Model on Pure data (TMPD) and Trojan Model on Trojaned data (TMTD)
compared to the original Pure Model Pure Data (PMPD) with their standard
deviation. Accuracy reported as average on 50 models of various configurations
on each dataset. \label{tab:average_accuracy-2}}
\end{table}

\subsection{Trojan Trigger Recovery}

\paragraph{\emph{To qualitatively evaluate the ability of $\protect\ModelName$,
we analyze the recovered patches obtained by our process and compare
them to the original triggers used in training Trojan models. In this
experiment, we use a set of 10 randomly selected 2x2 triggers to build
10 Trojan models with GTSRB dataset. For each Trojan model, we apply
the reverse engineering procedure with 50 different initialization
of $\Delta\boldsymbol{I}$. }}

\paragraph{\emph{The first notable phenomenon we observed is that with different
initializations, the procedure obtains various patches. Among these
patches, only some of them are similar to the original trigger. However,
all of them have equally high Trojan effectiveness on the target class.
This result suggests that the Trojan model allows extra strange patches
to be also effective in driving classification result to the target
class. }}

This interesting side-effect can be explained by reflecting on the
behaviour of the Trojan CNN training process. When trained with the
\emph{Trojan Data} which are the mix of true training data and patched
images, the CNN tries to find the common pattern between the true
samples belong to the target class and the fake samples containing
the trigger. This compromising between two far-away sets of data with
the same label leads to interpolating in latent space into a manifold
of inputs that would yield the target class decision. Such manifold
can have many modes resulting in many clusters of effective triggers.

\paragraph{\emph{To understand further, we use k-means to find out how the effective
triggers group up and discover that there are usually four to six
clusters of them per Trojan model. For illustration, we do PCA on
the 12-dimensional patch signal (2x2 patch of three channels), and
plot the data along the first two principal components in Fig. \ref{fig:Clusters}.
The number of clusters (k) for each Trojan model is also shown along
with the plot. For example, for the first sub-figure in Fig. \ref{fig:Clusters},
the number of clusters, k, formed with the 50 effective reverse-engineered
patches is five, which are shown in five different colours.}}
\begin{figure}
\subfloat[k=5]{\includegraphics[scale=0.15]{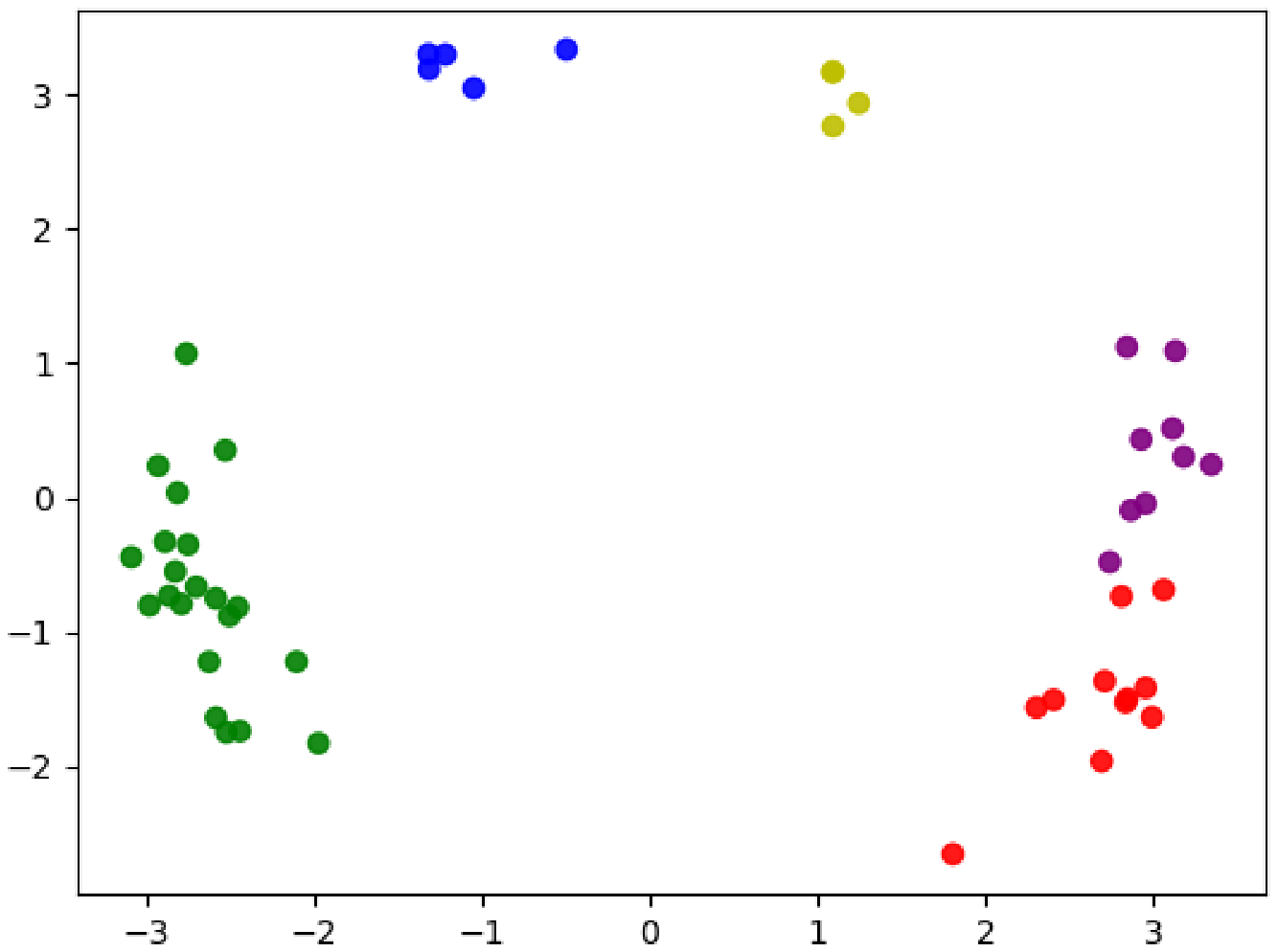}

}\subfloat[k=4]{\includegraphics[scale=0.15]{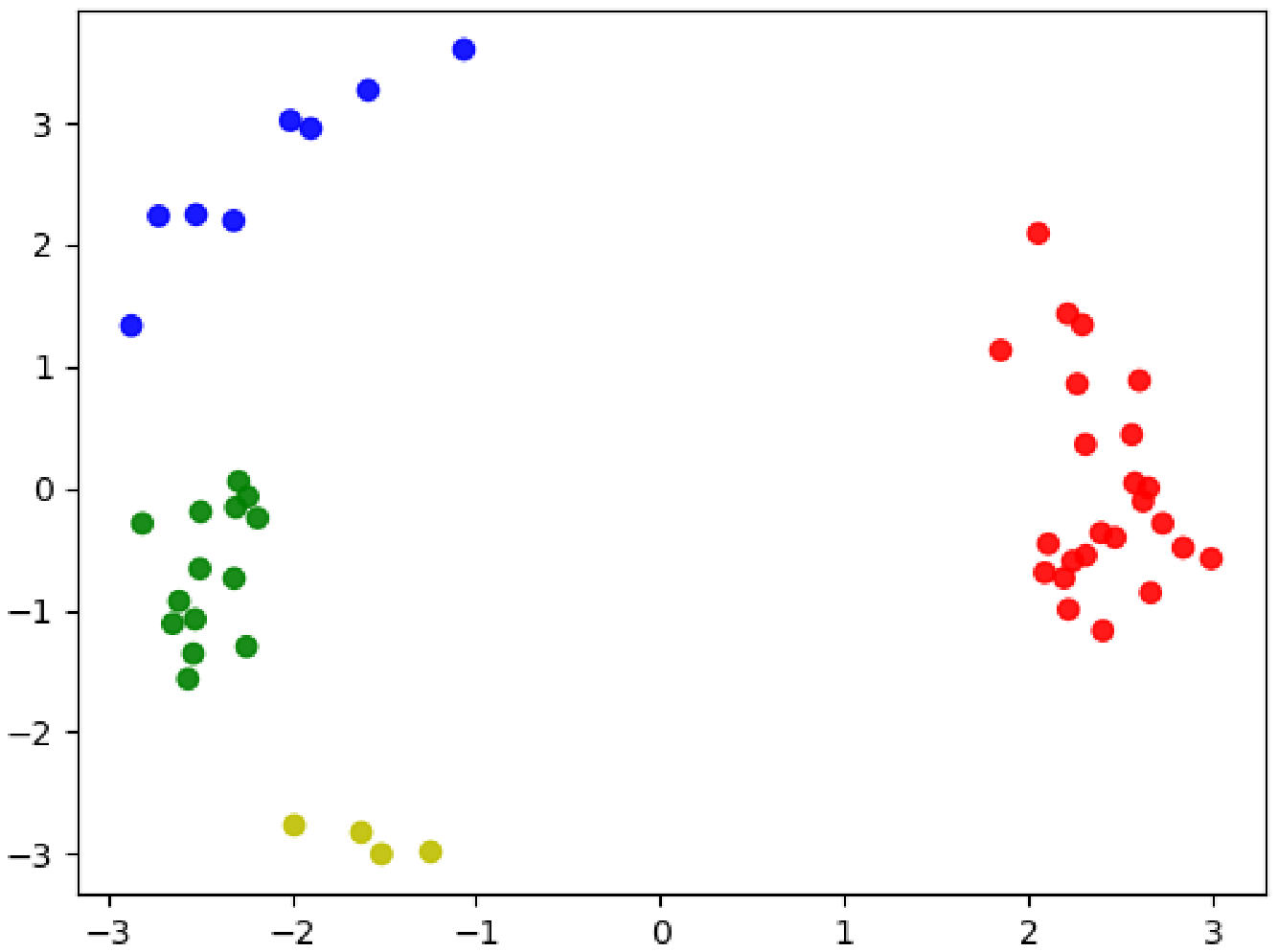}

}\subfloat[k=5]{\includegraphics[scale=0.15]{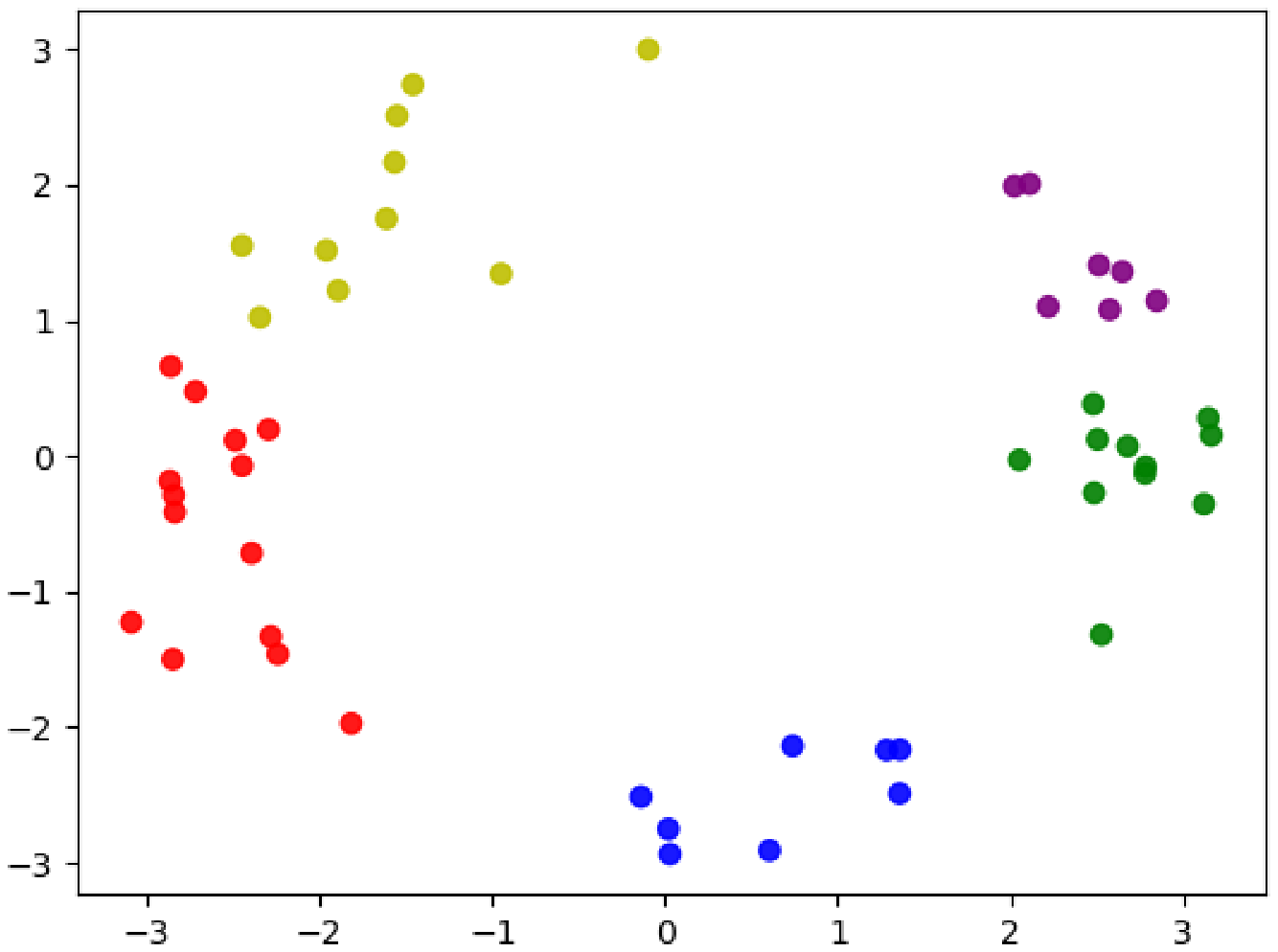}

}\subfloat[k=6]{\includegraphics[scale=0.15]{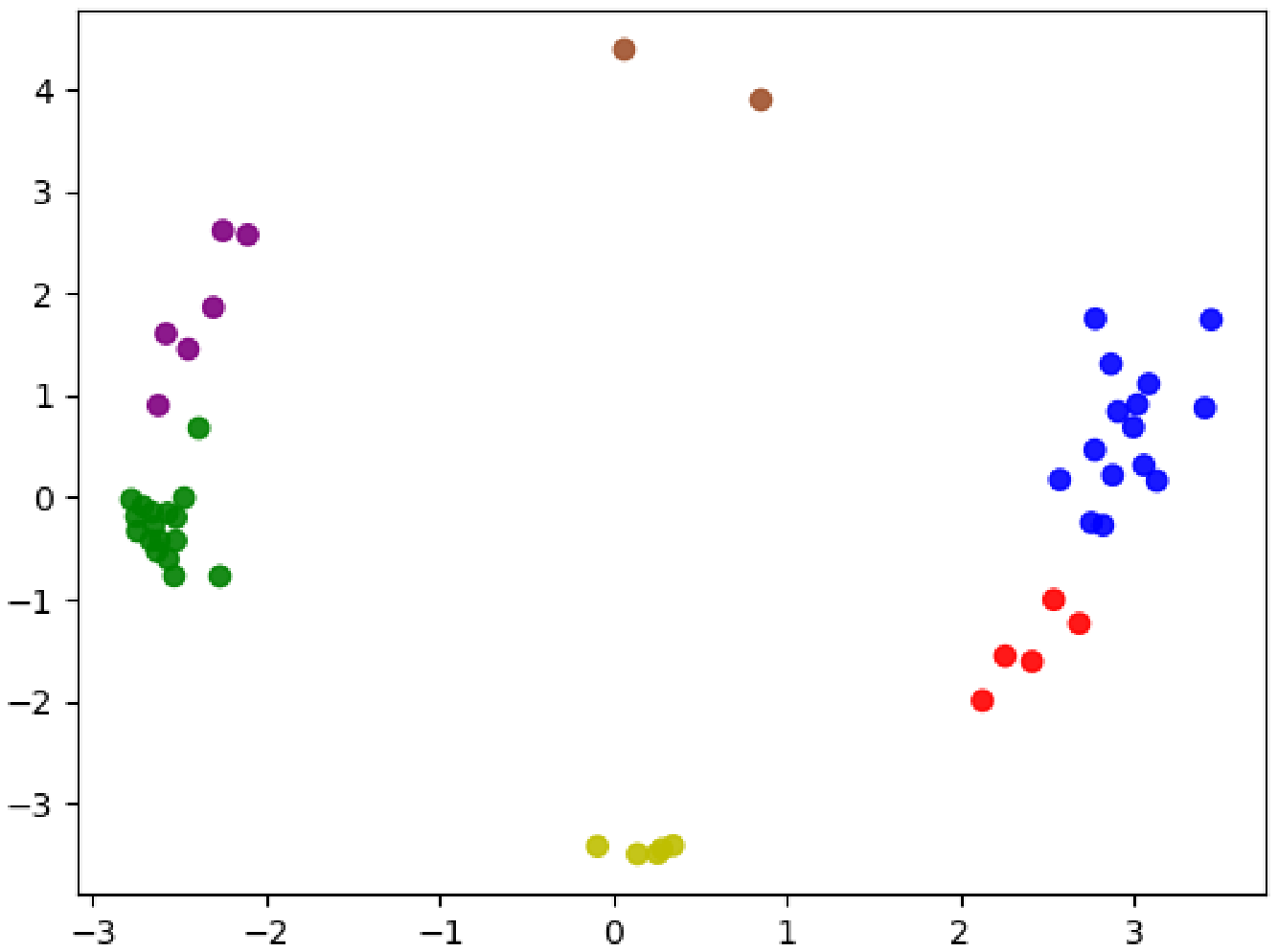}

}\subfloat[k=5]{\includegraphics[scale=0.15]{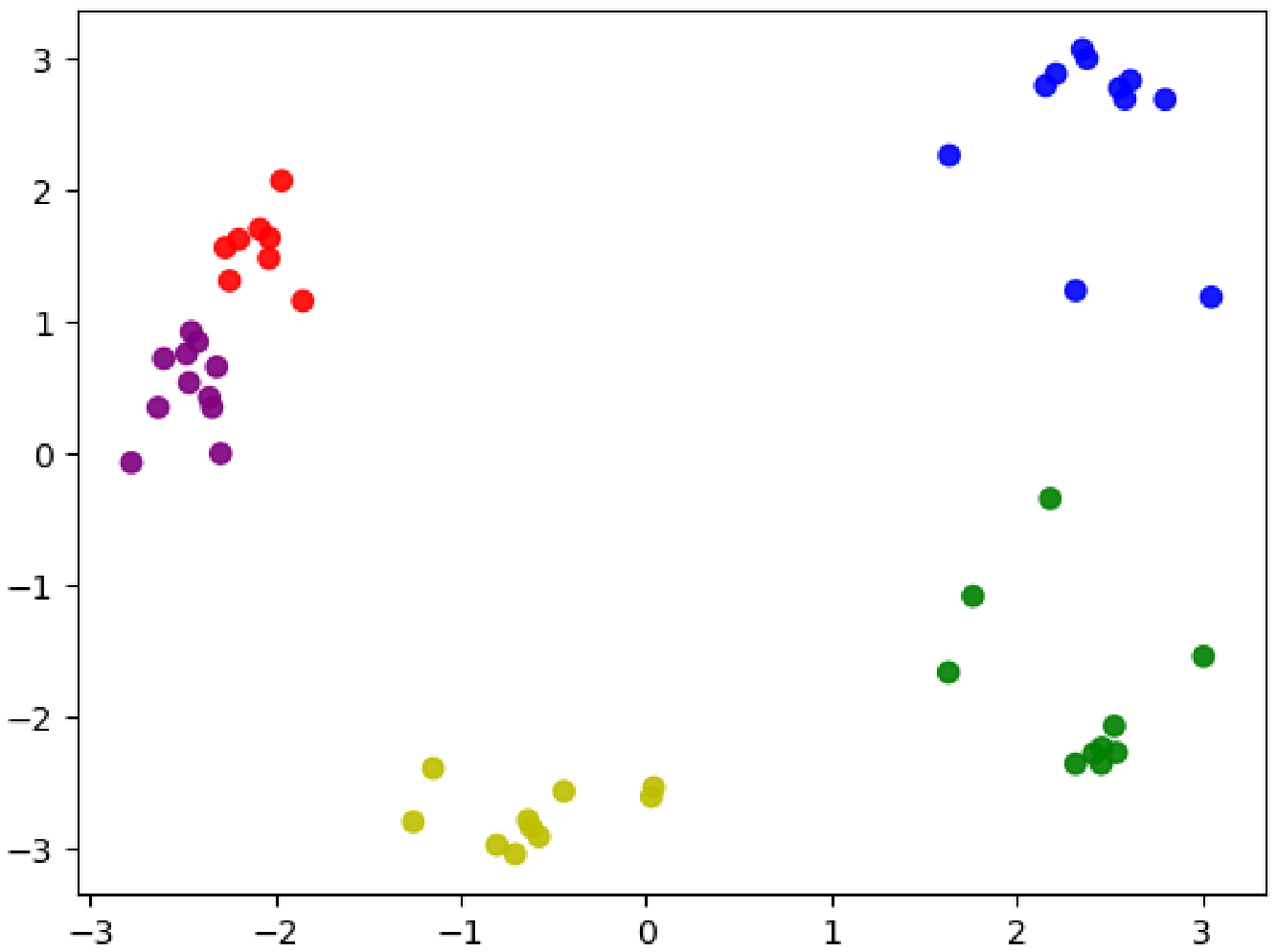}

}\\
\subfloat[k=5]{\includegraphics[scale=0.15]{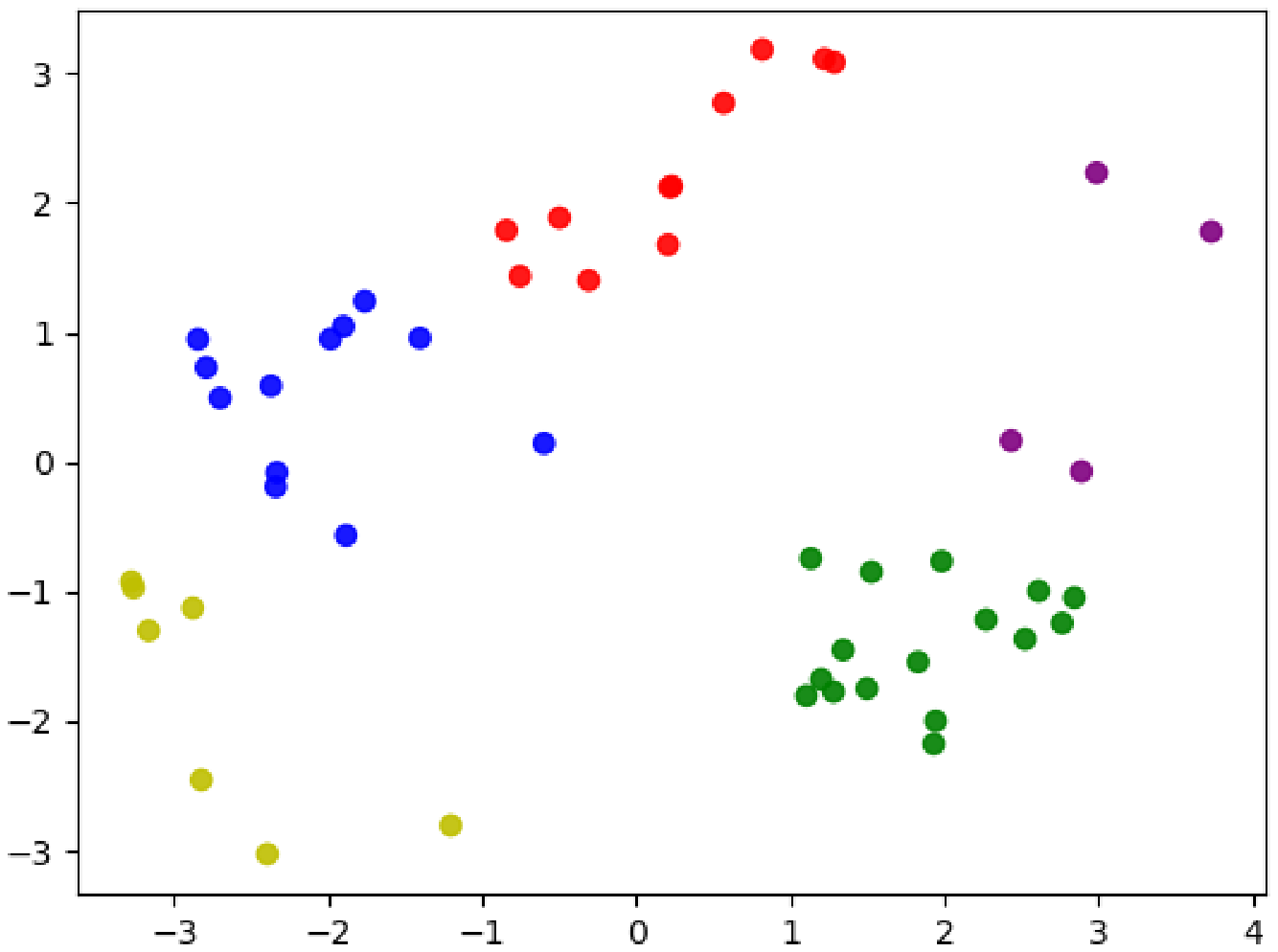}

}\subfloat[k=6]{\includegraphics[scale=0.15]{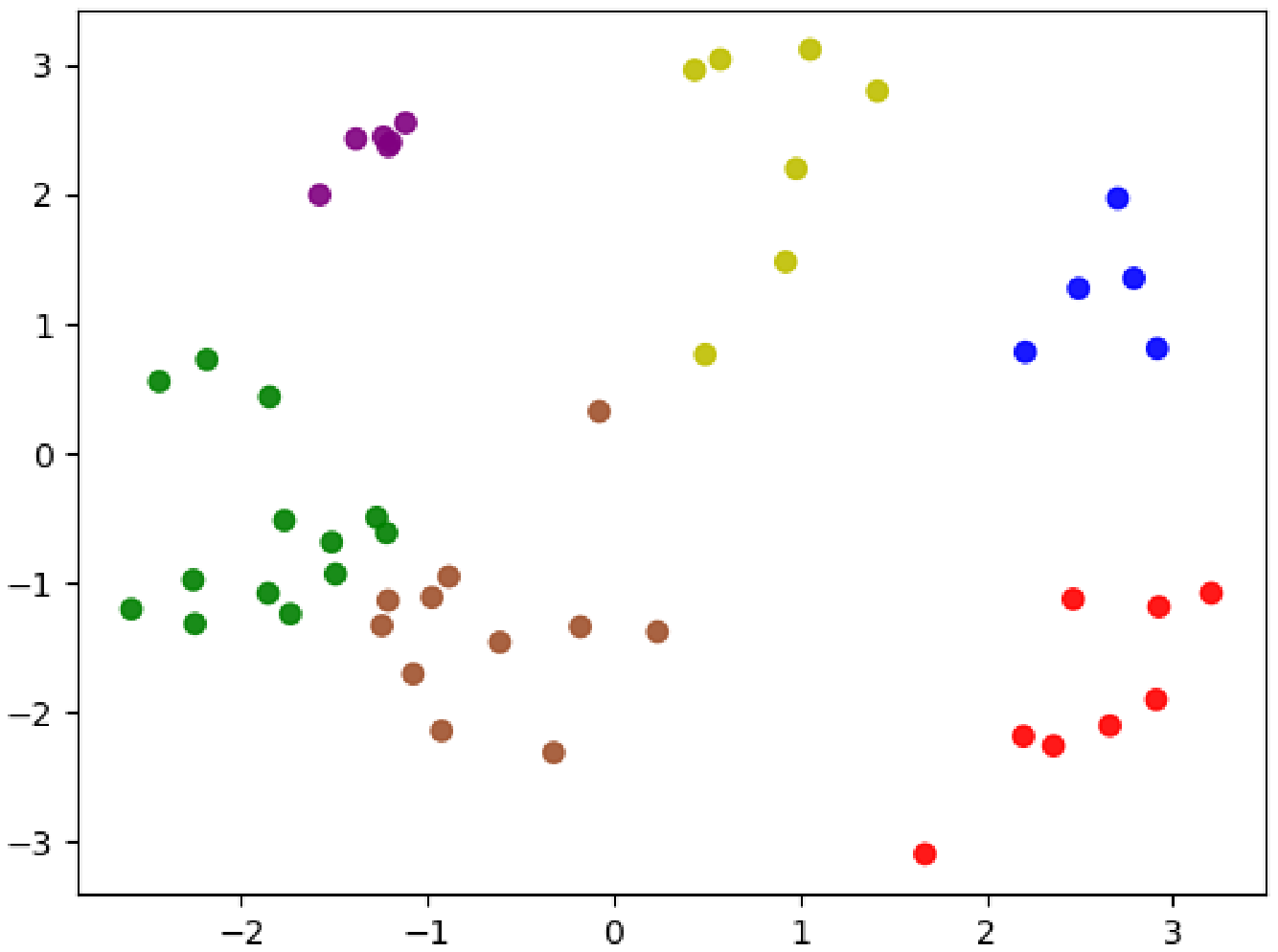}

}\subfloat[k=4]{\includegraphics[scale=0.15]{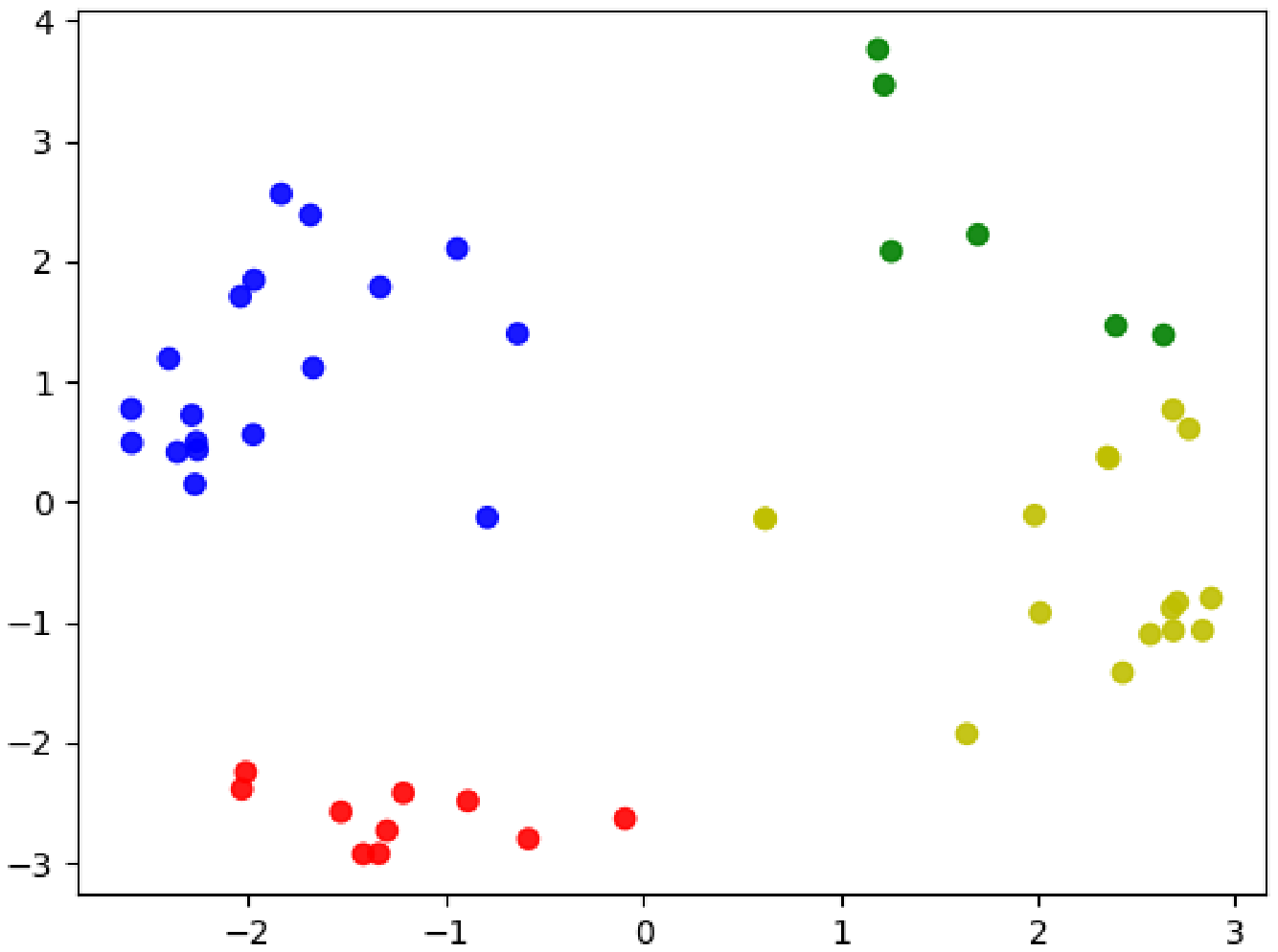}

}\subfloat[k=4]{\includegraphics[scale=0.15]{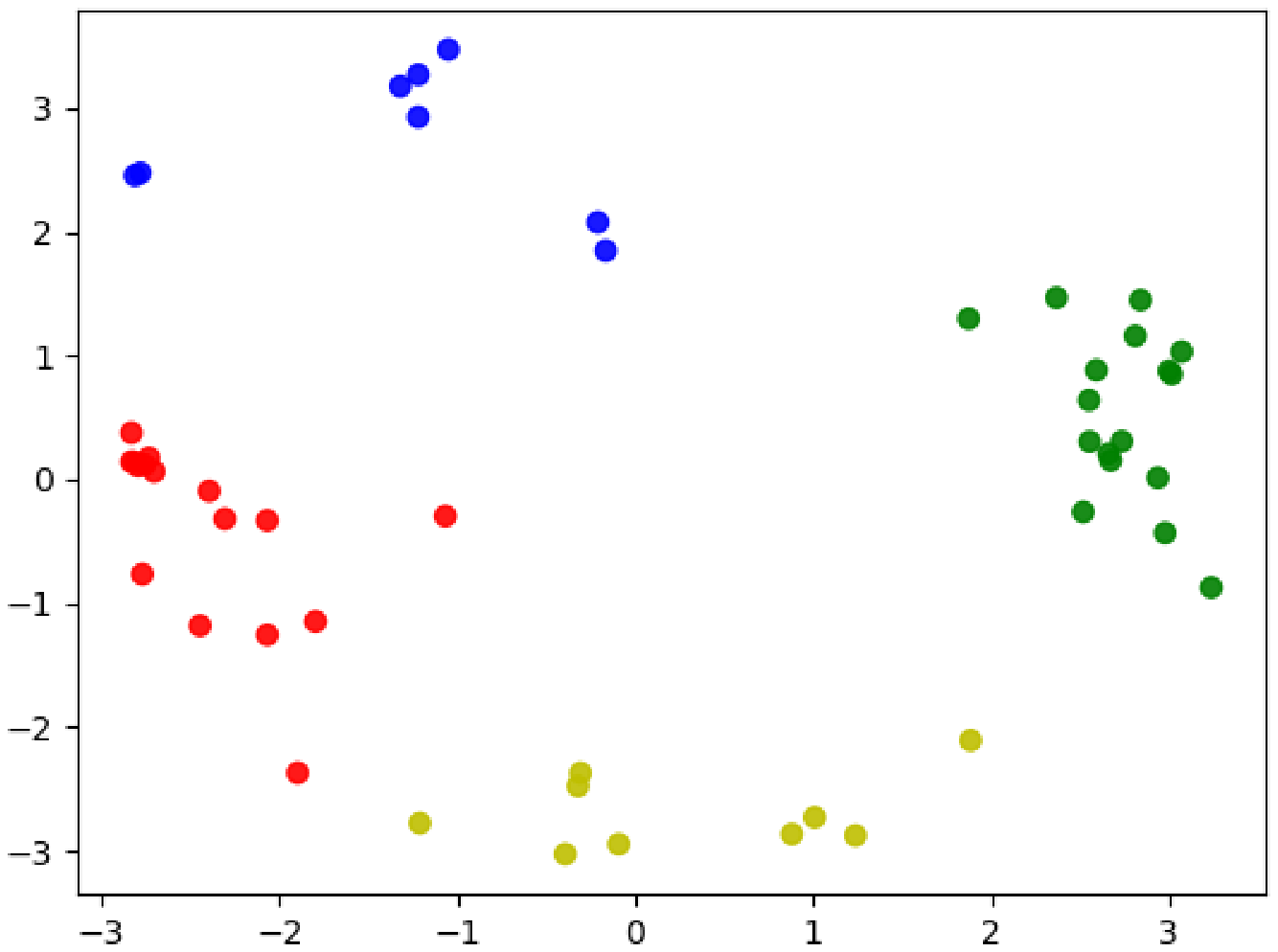}

}\subfloat[k=4]{\includegraphics[scale=0.15]{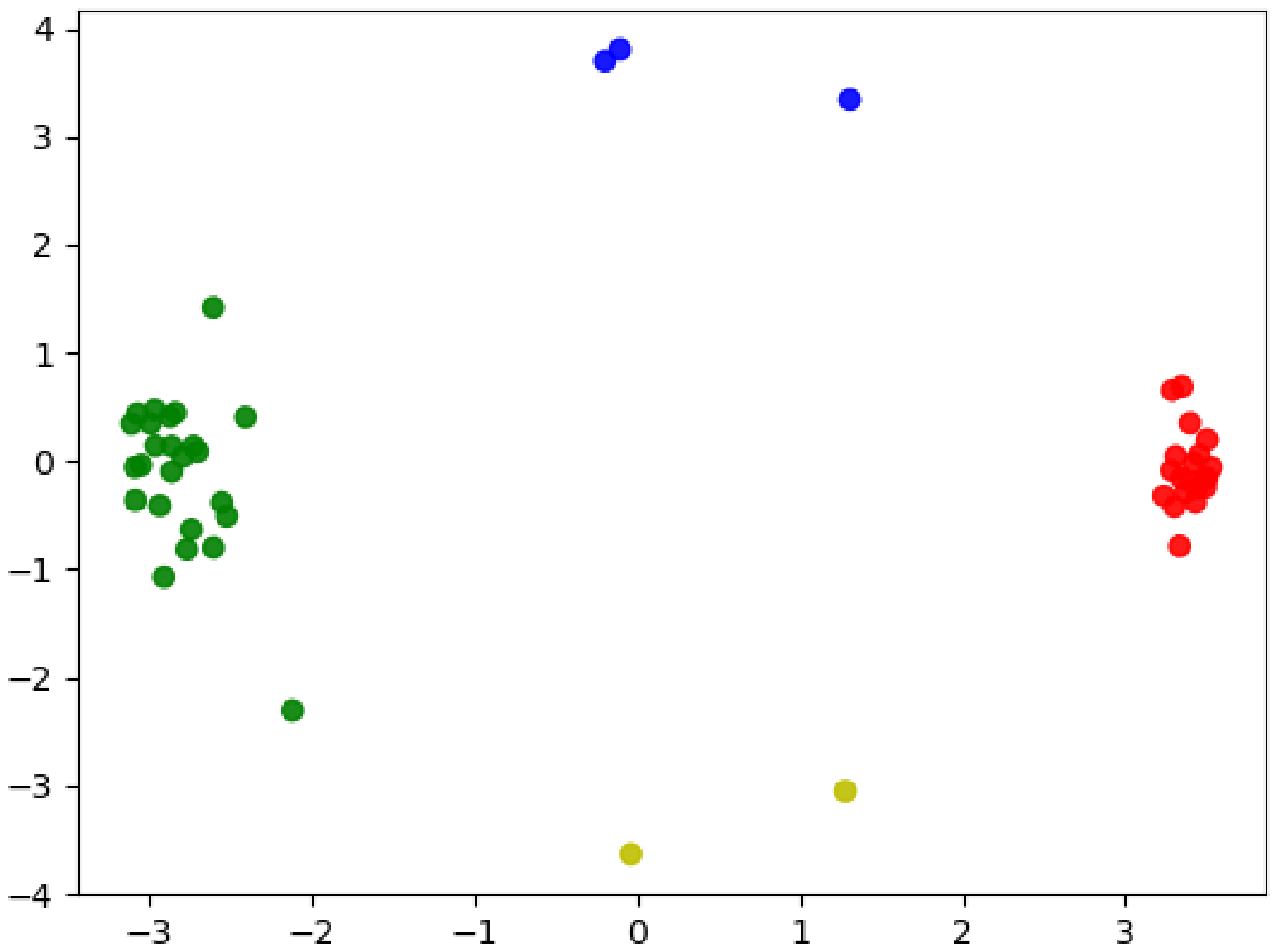}

}\caption{Clusters of the 50 reverse-engineered patches of 10 Trojan models
plotted using the first two principal components of patch signals.
The subtitle denotes the number of clusters discovered by k-means.
\label{fig:Clusters}}
\end{figure}

\paragraph{\emph{As we now understand the multi-modal nature of solution space,
we expect that one of the modes we discovered includes the original
trigger. To verify this, we select the closest recovered patch and
compare with the original trigger using root mean squared error. Qualitative
comparison is shown in Fig. \ref{fig-bestpatches}. Quantitatively,
the rmse of these recovered patches ranges from 0.12 to 0.22 and averages
at 0.17 for the 10 Trojan models, in the space of RGB patch ranging
from 0.0 to 1.0. These affirms that our proposed method can come up
with patches which is almost identical to the trigger originally used
to train the Trojan models.}}

\begin{figure}
\begin{centering}
\subfloat[rmse = 0.12]{\includegraphics[scale=0.3]{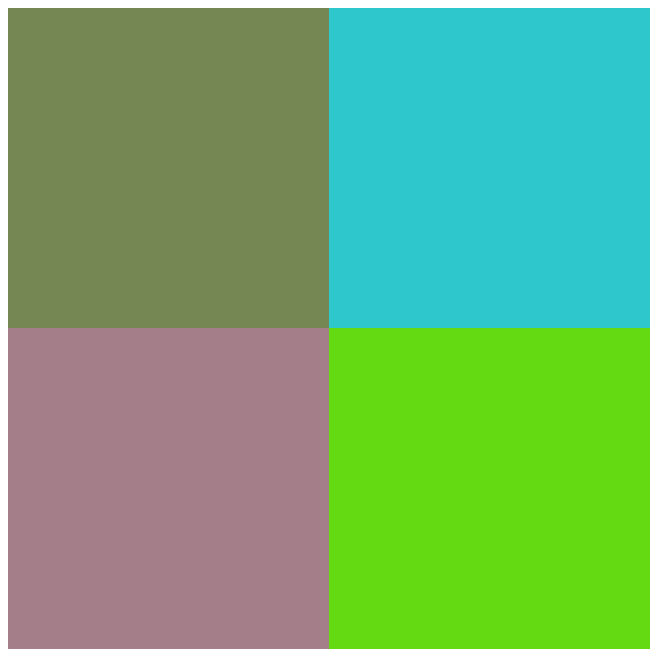}\includegraphics[scale=0.3]{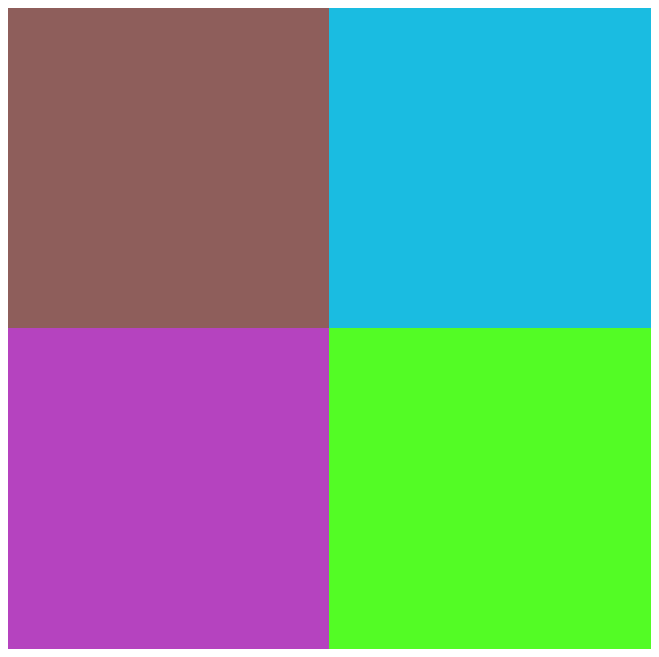}

}\hspace*{\fill}\subfloat[rmse = 0.14]{\includegraphics[scale=0.3]{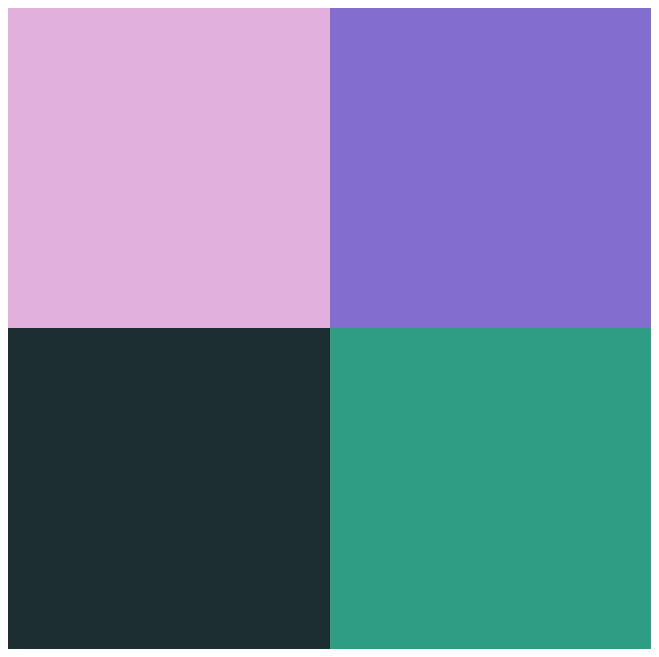}\includegraphics[scale=0.3]{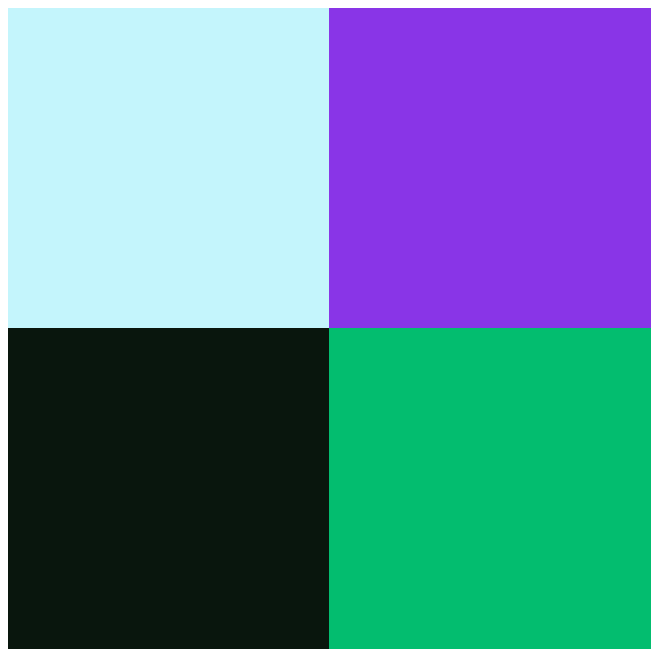}}\\
\subfloat[rmse = 0.15]{\includegraphics[scale=0.3]{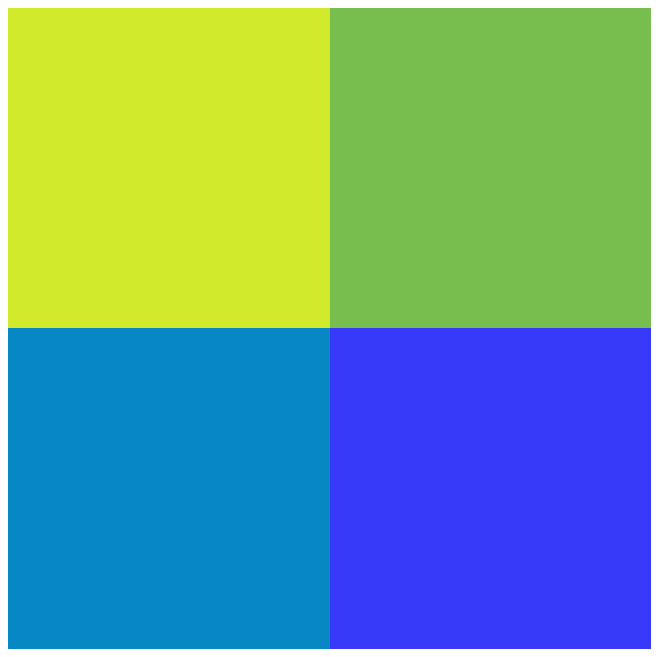}\includegraphics[scale=0.3]{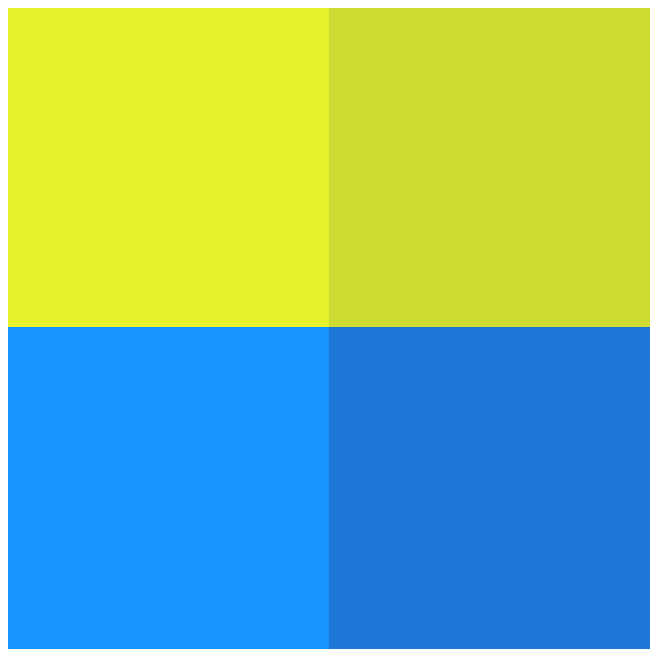}}\hspace*{\fill}\subfloat[rmse = 0.16]{\includegraphics[scale=0.3]{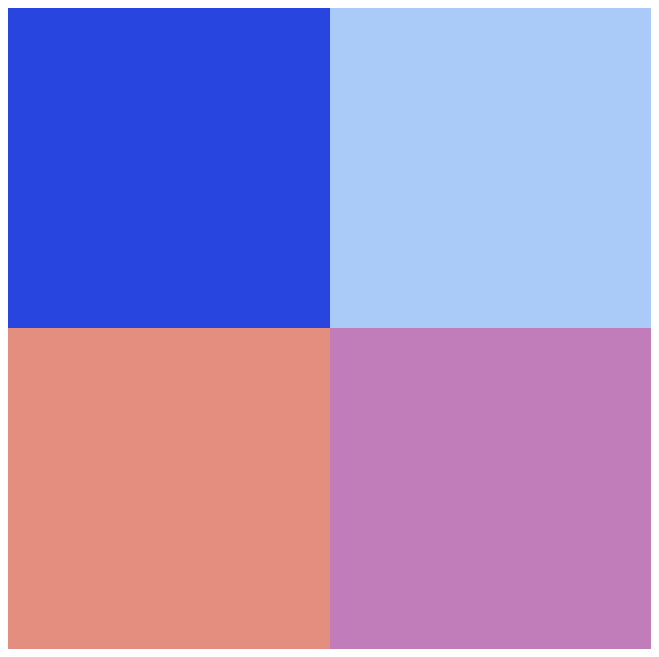}\includegraphics[scale=0.3]{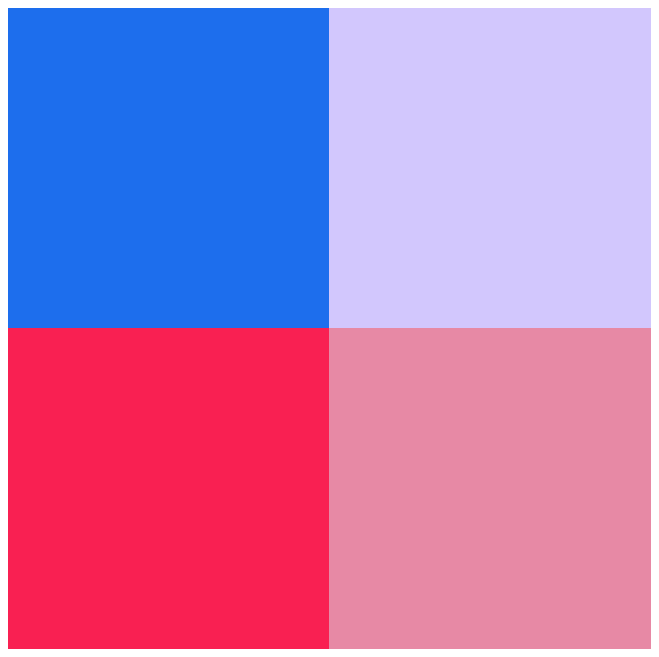}

}\\
\subfloat[rmse = 0.16]{\includegraphics[scale=0.3]{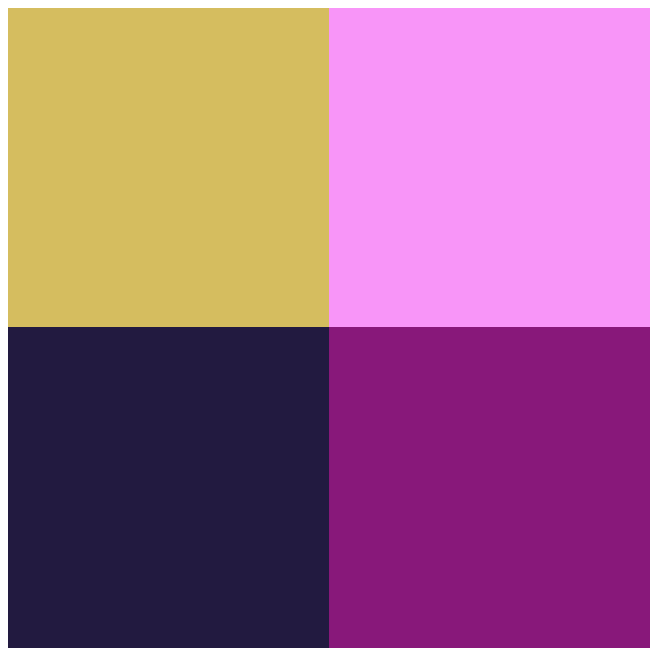}\includegraphics[scale=0.3]{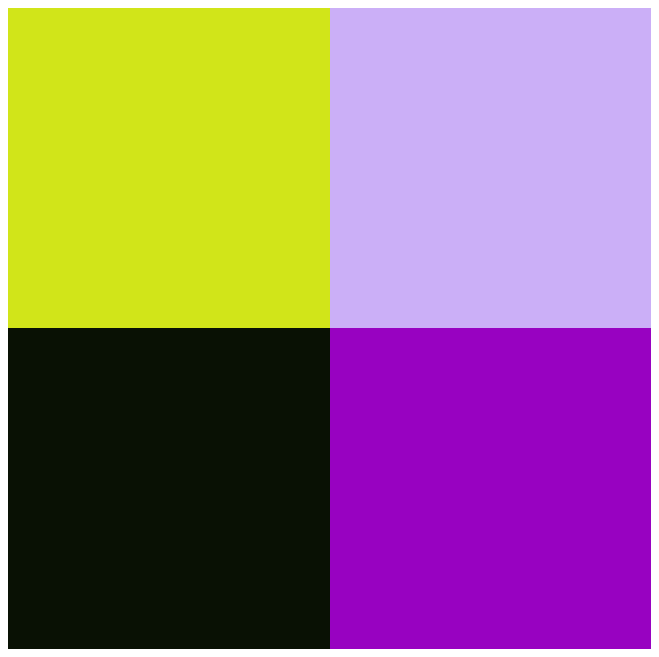}}\hspace*{\fill}\protect\subfloat[rmse = 0.18]{\includegraphics[scale=0.3]{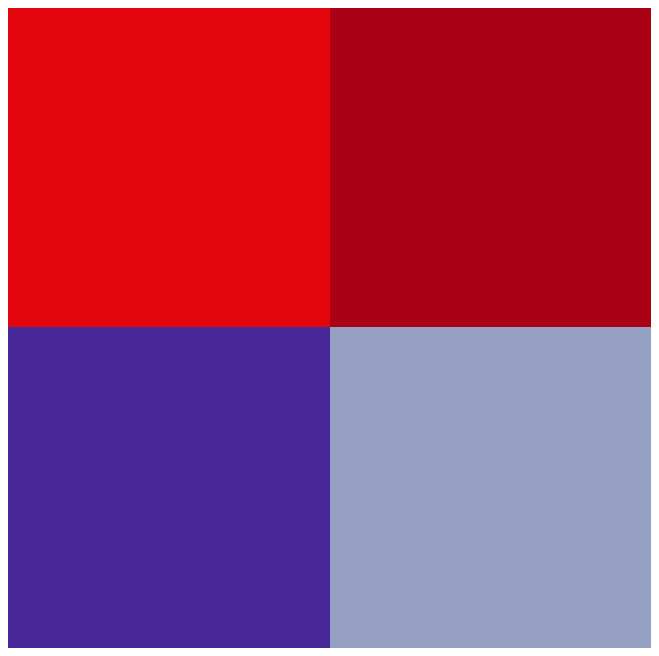}\includegraphics[scale=0.3]{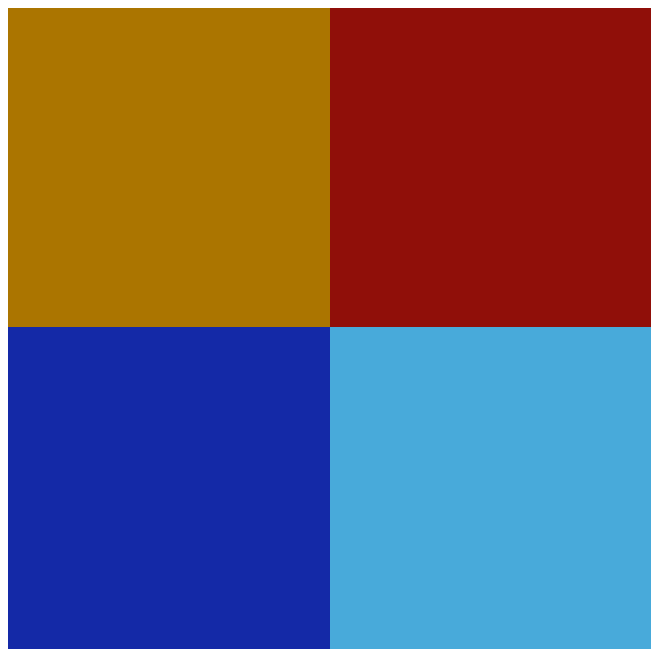}}\\
\subfloat[rmse = 0.18]{\includegraphics[scale=0.3]{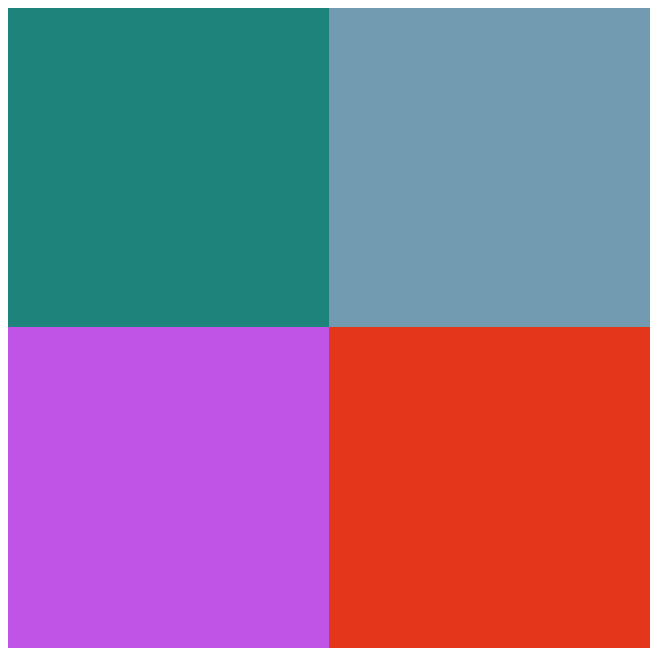}\includegraphics[scale=0.3]{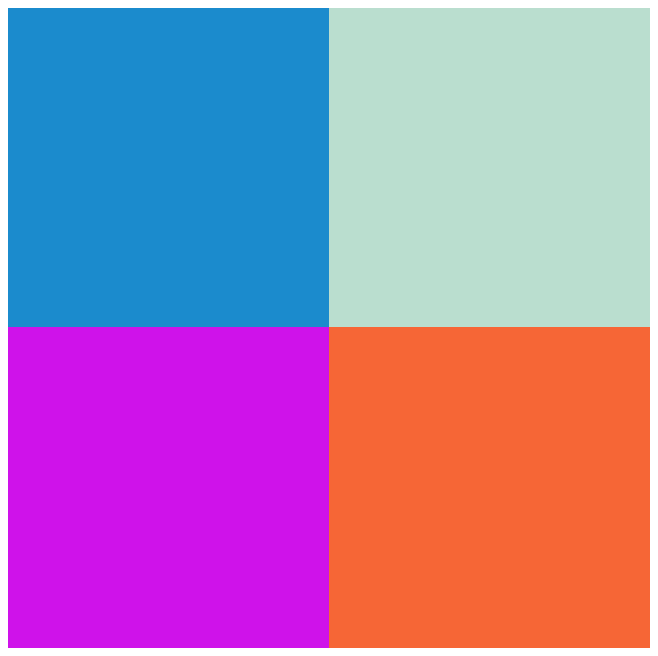}

}\hspace*{\fill}\protect\subfloat[rmse = 0.19]{\includegraphics[scale=0.3]{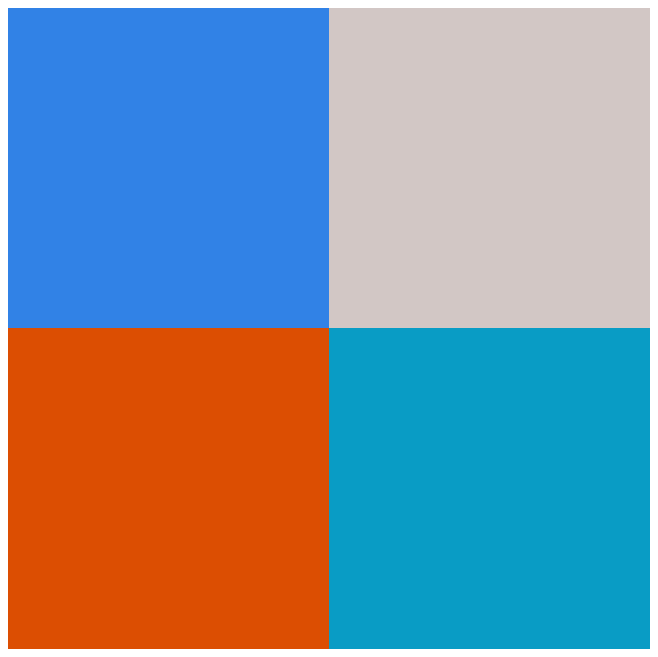}\includegraphics[scale=0.3]{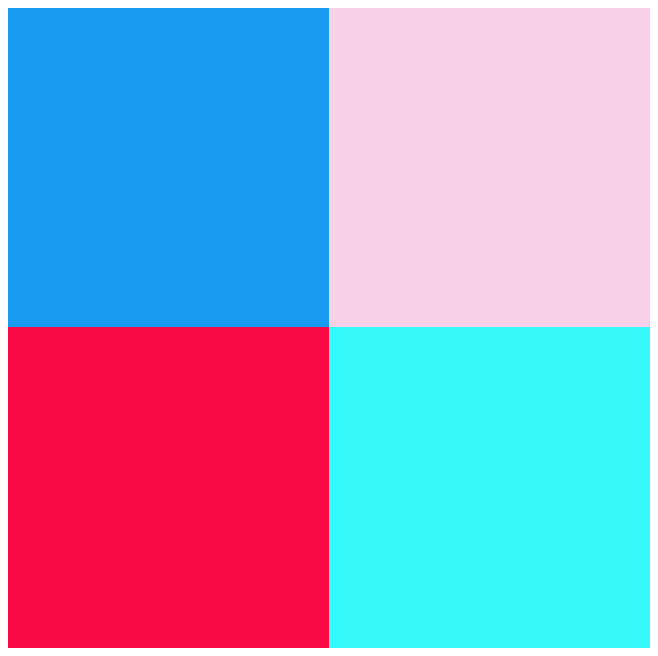}}\\
\subfloat[rmse = 0.20]{\includegraphics[scale=0.3]{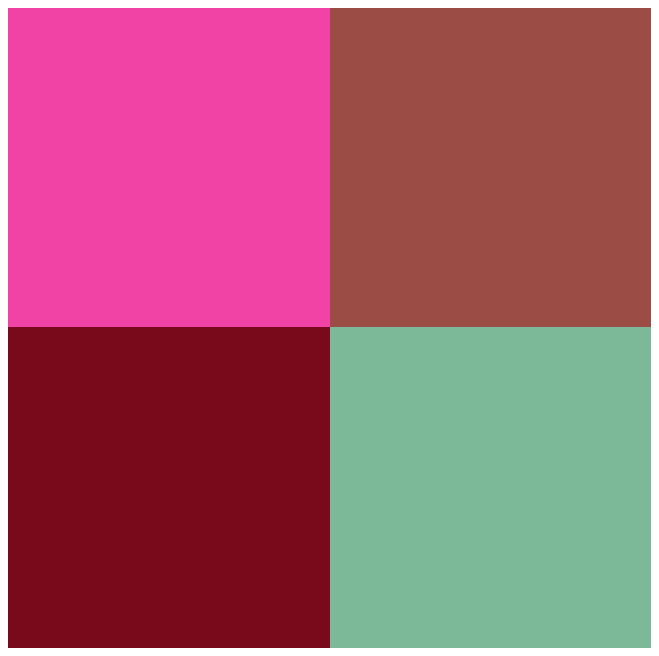}\includegraphics[scale=0.3]{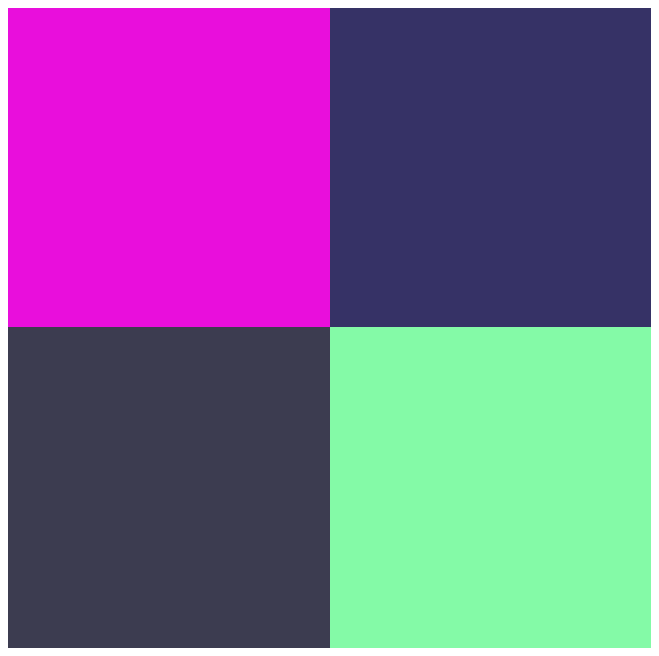}}\hspace*{\fill}\subfloat[rmse = 0.22]{\includegraphics[scale=0.3]{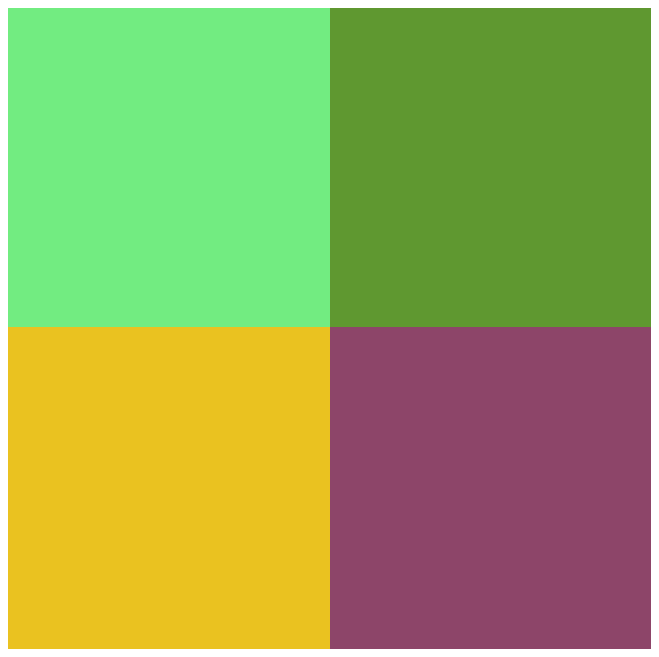}\includegraphics[scale=0.3]{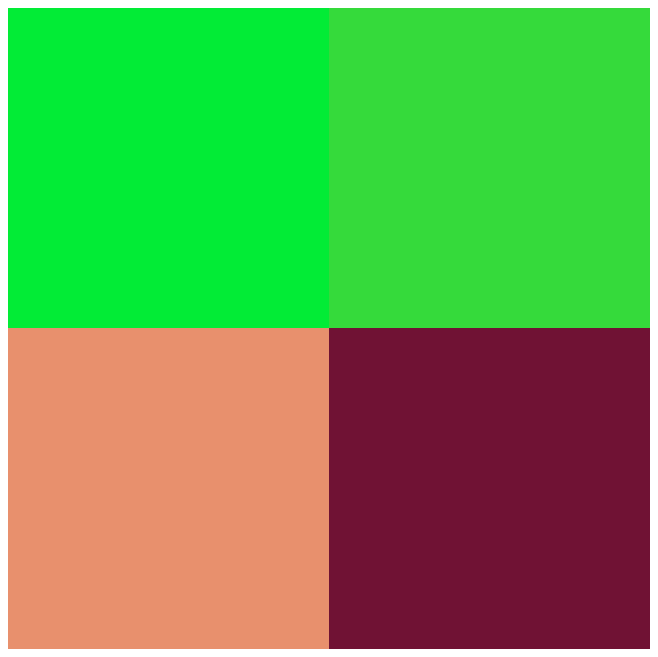}}
\end{centering}
\caption{Reverse-engineered patches compared to original trigger on 10 Trojan
models. At each pair, the original patch is on the left while the
retrieved one is on the right.\label{fig-bestpatches}}
\end{figure}

\subsection{Trojan Model Detection}

The Trojan model detection is formed as a binary classification using
the entropy score defined in Eq. \ref{eq:3}. The negative class includes
50 Pure Models (PM) trained similarly but with different parameter
initialization; the positive class contains 50 Trojan models (TM)
trained with different random triggers. The Trojan triggers are of
sizes 4x4, 6x6, 8x8 with transparency set as 1.0 and 2x2 with transparency
set as 1.0 and 0.8 as detailed in Table \ref{model_details}. We run
the scanning procedure once and use the reverse-engineered trigger
recovered to compute the entropy score of the pure and Trojan models
without knowing the potential target class. These scores are presented
in Table \ref{entropy_ourmethod}. The lesser the entropy score the
more effective the retrieved reverse-engineered mask and trigger,
hence it is more possible that the model in test is Trojaned. 
\begin{table}
\begin{centering}
\begin{tabular}{|c|c|c|}
\hline 
\multirow{2}{*}{Model} & \multicolumn{2}{c|}{\emph{Entropy score}}\tabularnewline
\cline{2-3} 
 & ~~~GTSRB~~~ & ~~~CIFAR-10~~~\tabularnewline
\hline 
\hline 
\textbf{~~~Avg Pure Models~~~} & \textbf{4.95} & \textbf{3.00}\tabularnewline
\hline 
set 1 & 0.003 & 0.001\tabularnewline
\hline 
set 2 & 0.004 & 0.002\tabularnewline
\hline 
set 3 & 0.004 & 0.003\tabularnewline
\hline 
set 4 & 0.005 & 0.006\tabularnewline
\hline 
set 5 & 0.003 & 0.001\tabularnewline
\hline 
\textbf{~~~Avg Trojan Models~~~} & \textbf{0.004} & \textbf{0.002}\tabularnewline
\hline 
\end{tabular}\vspace{0.3cm}
\par\end{centering}
\caption{Entropy score computed based on the mask and patch retrieved by our
proposed method on GTSRB and CIFAR-10 dataset.\label{entropy_ourmethod}}
\end{table}

From Table \ref{entropy_ourmethod}, we observe that the gap between
the high scores of the pure models and the low score of Trojan models
are significant and stable across multiple settings. This reliability
is achieved through the universality of entropy score measure which
does not depend on any change in model and data (see Lemma 1). Because
of this, it is straightforward with $\Model$ to set a robust universal
threshold using number of classes and expected Trojan effectiveness
to detect the Trojan models using the entropy score. 

We compare the performance of $\Model$ in detecting Trojan Model
with Neural Cleanse (NC)\cite{Wang_etal_19Neural} which is the state-of-the-art
for this problem. We use the same settings on models and data for
both $\Model$ and NC. Similar to the entropy score used in $\Model$,
NC uses the anomaly index to rank the pureness of the candidate model.
These scores of the two methods are shown in Table \ref{tab:min-max_entropy_ai}.

\begin{table}
\begin{centering}
\begin{tabular}{|c|c|c|c|c|}
\hline 
\multirow{2}{*}{Model} & \multicolumn{2}{c|}{GTSRB} & \multicolumn{2}{c|}{CIFAR-10}\tabularnewline
\cline{2-5} 
 & \emph{~Entropy score~} & \emph{~Anomaly Index~} & \emph{~Entropy score~} & \emph{~Anomaly Index~}\tabularnewline
\hline 
\hline 
Pure Models & {[}4.85, 5.04{]} & {[}0.73, 23.30{]} & {[}2.55, 3.32{]} & {[}0.68, 2.56{]}\tabularnewline
\hline 
~Trojan Models~ & {[}0.0, 0.011{]} & {[}11.64, 71.08{]} & {[}0.0, 0.01 {]} & {[}22.98, 244.94{]}\tabularnewline
\hline 
\end{tabular}\vspace{0.3cm}
\par\end{centering}
\caption{The minimum and maximum\emph{ }value of entropy scores (ours) and
anomaly indexes (NC \cite{Wang_etal_19Neural}) of the pure and Trojan
models represented as \emph{{[}min,max{]}}.\label{tab:min-max_entropy_ai}}
\end{table}

In our method, the gap between pure and Trojan area in entropy scores
are large and consistent. Meanwhile, the Neural Cleanse's anomaly
index varies in a bigger range intra-class and close or overlapping
inter-class. This again supports that our proposed method can come
up with a more robust threshold to detect Trojan models. 

Table \ref{tab:Accuracy-as-F1-score} show the final accuracy
of Trojan model detection between our $\Model$ and NC. The upper
limit of entropy score for GTSRB and CIFAR-10 according to Lemma 1
is 0.1347 and 0.1125 respectively. For STS, we use this scores as
the threshold for F1-score computation. For NC, we set the threshold
to be 2.0 following recommendation in the method \cite{Wang_etal_19Neural}. 

\begin{table}
\centering{}%
\begin{tabular}{|c|c|c|c|c|}
\hline 
\multirow{2}{*}{~~~Measures~~~} & \multicolumn{2}{c|}{GTSRB} & \multicolumn{2}{c|}{CIFAR-10}\tabularnewline
\cline{2-5} 
 & ~~~$\Model$ (Ours)~~~ & ~~~NC~~~ & ~~~$\Model$ (Ours)~~~ & ~~~NC~~~\tabularnewline
\hline 
\hline 
\multirow{1}{*}{F1-score} & \multirow{1}{*}{\textbf{1.0}} & 0.68  & \multirow{1}{*}{\textbf{1.0}} & 0.96 \tabularnewline
\hline 
\end{tabular}\vspace{0.3cm}
\caption{Accuracy (as F1-score) of Trojan Model detection between our method (STS) and
NC \cite{Wang_etal_19Neural}.\label{tab:Accuracy-as-F1-score}}
\end{table}
\vspace{-1cm}

\subsection{Computational Complexity}

Beside the effectiveness in accurately detecting the Trojan model
and recover the effective triggers, we also measured the complexity
and computational cost of the Trojan scanning process. As our method
does not make assumption about the target class $c_{t}$, the complexity
is constant ($\mathcal{O}(1)$) to this parameter. In the mean time,
the state-of-the-art methods such as NC rely on the optimisation process
that assumes knowledge about the target class. This results in a loop
through all of the possible classes and ends up in a complexity of
$\mathcal{O}(C)$. Because of this difference in complexity, $\Model$
is significantly faster than NC. Concretely, in our experiments, $\Model$
takes around 2 hours to detect 10 Trojan models on GTSRB, compared
to more than 9 hours that NC took on the identical settings.

\section{Conclusion and Future work}

In this paper, we propose a method to detect Trojan model using trigger
reverse engineering approach. We use a method to find out the minimum
change that can map all the instances to a target class, for a Trojan
model. The experiments conducted on GTSRB and CIFAR-10 show that our
proposed method can reverse-engineer visually similar patch with high
Trojan effectiveness. We propose a measure, entropy score, to compute
the robust upper threshold to detect the Trojan models. We report
entropy score and F1-score to support our claims. It is evident from
the results that our method is more robust and less computationally
expensive compared to the state-of-the-art method.

Future work is possible by generating models which has multiple target
classes. Another future possibility is to work on Trojans which are
distributed rather than a solid patch. We also found that for a single
Trojan model, there exists multiple triggers as solution, so given
this we also focus to work on getting a distribution of Trojan triggers
for a Trojan model.

\bibliographystyle{splncs04}

\end{document}